\newcommand\vldbavailabilityurl{https://github.com/Gnaiqing/WeShap}
\newcommand\vldbpagestyle{plain} 
\useunder{\uline}{\ul}{}
\newtheorem{theorem}{Theorem}
\newtheorem{definition}{Definition}
\newtheorem*{example}{Example}
\begin{document}
\title{WeShap: Weak Supervision Source Evaluation with Shapley Values}

\author{Naiqing Guan}
\affiliation{%
  \institution{University of Toronto}
  \city{Toronto}
  \country{Canada}
}
\email{naiqing.guan@mail.utoronto.ca}

\author{Nick Koudas}
\affiliation{%
  \institution{University of Toronto}
  \city{Toronto}
  \country{Canada}
}
\email{koudas@cs.toronto.edu}

\begin{abstract}
Efficient data annotation stands as a significant bottleneck in training contemporary machine learning models. The Programmatic Weak Supervision (PWS) pipeline presents a solution by utilizing multiple weak supervision sources to automatically label data, thereby expediting the annotation process. Given the varied contributions of these weak supervision sources to the accuracy of PWS, it is imperative to employ a robust and efficient metric for their evaluation. This is crucial not only for understanding the behavior and performance of the PWS pipeline but also for facilitating corrective measures.

In this paper, we introduce WeShap values as an evaluation metric. This metric quantifies the average contribution of weak supervision sources within a proxy PWS pipeline, leveraging the theoretical underpinnings of Shapley values. We demonstrate efficient computation of WeShap values using dynamic programming, achieving quadratic computational complexity relative to the number of weak supervision sources. 

Our experiments demonstrate the versatility of WeShap values across various applications, including the identification of beneficial or detrimental labeling functions, refinement of the PWS pipeline, comprehension of the pipeline's behavior, and scrutinizing specific instances of mislabeled data. Although initially derived from a specific proxy PWS pipeline, we empirically demonstrate the generalizability of WeShap values to other PWS pipeline configurations. Our findings indicate a noteworthy average improvement of 5.0 points in downstream model accuracy through the revision of the PWS pipeline compared to previous state-of-the-art methods, underscoring the efficacy of WeShap values in enhancing data quality for training machine learning models.
\end{abstract}

\maketitle

\pagestyle{\vldbpagestyle}

\ifdefempty{\vldbavailabilityurl}{}{
\vspace{.3cm}
\begingroup\small\noindent\raggedright\textbf{PVLDB Artifact Availability:}\\
The source code, data, and/or other artifacts have been made available at \url{\vldbavailabilityurl}.
\endgroup
}

\section{Introduction}
Deploying modern machine learning models in new application scenarios often hinges on the availability of large annotated datasets, posing a significant bottleneck. While obtaining unlabeled data is relatively straightforward, annotating typically demands substantial effort and financial resources. Programmatic weak supervision (PWS) \cite{ratner2017snorkel,zhang2022survey} offers a promising avenue for mitigating this manual annotation burden. In the PWS paradigm, instead of annotating individual instances, users concentrate on developing multiple weak supervision sources capable of automatically annotating a portion of the data in a noisy manner. These sources may originate from crowdsourced data  \cite{lan2019learning}, human-designed heuristics \cite{ratner2017snorkel,yu2020fine}, or pre-trained models \cite{bach2019snorkel,smith2022language}. Represented as labeling functions (LFs), these weak supervision sources assign weak labels to some data points while abstaining from others. Given the potential contradictions among weak labels, a label model is trained to denoise and aggregate them. Subsequently, the labels predicted by the label model are utilized to train the downstream machine learning model, also called the end model.

Within the PWS framework, labeling functions (LFs) are the fundamental components for automated annotation, profoundly influencing data quality and downstream machine learning model accuracy. Despite extensive research on the efficient design of LFs, with notable studies 
\cite{varma2018snuba,hsieh2022nemo,boecking2020interactive,denham2022witan},
 scant attention has been devoted to LF quality evaluation methodologies. Conventionally, LF evaluation relies on metrics such as accuracy (evaluated on a withheld labeled dataset), coverage, or confliction with other LFs. For instance, the Snorkel library \cite{ratner2017snorkel}'s \textit{LFAnalysis} module reports these metrics. However, these metrics gauge different facets of LFs, often yielding conflicting assessments. Notably, while both high coverage and accuracy are desirable, they typically exhibit a negative correlation in practice. 
 Boeckling et al. \cite{boecking2020interactive} propose ranking LFs based on $(2\alpha_j-1)*l_j$, where $\alpha_j$ and $l_j$ denote the accuracy and coverage of LF $j$, respectively. They additionally present a theorem asserting that under assumptions of LF independence conditional on class labels and uniformly distributed LF errors, this ranking minimizes the expected risk of a certain label model on training data. However, these stringent assumptions are rarely met in practice. Furthermore, along with all aforementioned metrics, their metric overlooks the data distribution in feature space, which can significantly impact downstream model performance.

\begin{figure}[tbp]
\centering
\includegraphics[width=\columnwidth]{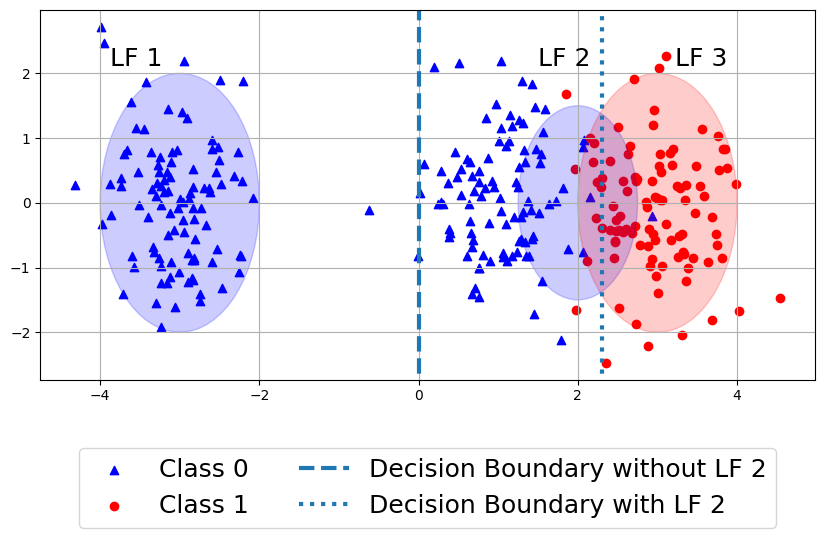}
 \caption{A motivating example for comprehensive LF evaluation. LF 2 is essential for reducing classification errors, although its accuracy is around 0.5. }\label{fig:motivate-example}
\end{figure}

Figure \ref{fig:motivate-example} provides a motivating example to illustrate the limitation of existing LF evaluation metrics. There are three LFs for a binary classification task, where LF 1 and LF 3 have over 90\% accuracy and 30\% coverage, while LF 2 has only around 50\% accuracy and 20\% coverage. However, omitting LF 2 during the label model training shifts the downstream model's decision boundary to $x=0$, leading to substantial classification errors. The key is that although LF 2 has lower accuracy and coverage compared to other labeling functions, it lies near the optimal decision boundary, making a greater contribution to the downstream model accuracy.

The provided example underscores the necessity for more nuanced LF evaluation metrics to assess each LF's contribution on downstream model accuracy directly. Zhang et al.'s work on Source-aware Influence Functions (SIF) \cite{zhang2022understanding} pioneered the exploration of comprehensive LF evaluation metrics by learning to evaluate LFs based on their influence. While SIF considers the data distribution in feature space, it falls short of directly measuring an LF's impact on the accuracy of the downstream model.

A well-established approach for allocating contributions in cooperative games is the Shapley Value \cite{shapley1953value}, renowned for its fairness, symmetry, and efficiency properties \cite{chalkiadakis2022computational}. Conceptually, if we regard each LF as a player in a cooperative game and utilize downstream model accuracy to gauge utility, then the Shapley value for each LF mirrors its contribution to the downstream model. However, direct computation of the Shapley value entails $\mathcal{O}(2^m)$ computations, where $m$ signifies the number of players in the game (equivalent to the number of LFs in our context). Moreover, evaluating a coalition of LFs proves computationally intensive, necessitating training the label model and downstream model from scratch. Consequently, direct computation of the Shapley value becomes impractical. This poses a fundamental challenge for LF evaluation based on Shapley value: \textit{how can we efficiently compute Shapley values to evaluate LFs within a PWS framework?}

In this paper, we address the above challenge by proposing the WeShap value (\textbf{We}akly Supervised \textbf{Shap}ley Value), which measures the \textit{exact shapley value} of LFs in a specific PWS framework, with majority voting being the label model and K-nearest neighbors (KNN) being the downstream model. We demonstrate that the Shapley value of LFs can be computed efficiently under these specific model choices. The WeShap value can be used to identify helpful or harmful LFs, revise PWS pipelines, and understand model behaviors. Our experiments show that while the WeShap value is computed based on specific model choices, it also attains excellent performance when other models, such as Snorkel \cite{ratner2017snorkel}, BERT \cite{devlin2018bert}, or ResNet \cite{he2016deep} are utilized in the pipeline.

\section{Preliminaries}
In this section, we provide background material on setting up the programmatic weak supervision framework and introduce Shapley values and their favorable properties.

\subsection{Programmatic Weak Supervision}
We consider C-way classification scenarios, where $\mathcal{X}$ represents the feature space, $\mathcal{Y}$ denotes the label space, and $\mathcal{P}$ stands for the underlying data distribution. Within the PWS framework, the user possesses a training dataset $D_{Train}=\{x_i\}_{i=1}^n$, where the corresponding labels $\{y_i\}_{i=1}^n$ are initially unknown. Optionally, a smaller validation set $D_{Valid}=\{x_i, y_i\}_{i=n+1}^{n+n_v}$ may exist for hyperparameter tuning. Both training and validation datasets adhere to the underlying distribution, i.e., $(x_i, y_i) \sim \mathcal{P}$. The objective is to train a machine learning model $f$ to minimize its expected risk on $\mathcal{P}$.

To attain this objective, the user crafts a set of labeling functions (LFs) $\Lambda=\{\lambda_j\}_{j=1}^m$, where each LF furnishes noisy labels (termed weak labels) to a subset of data. We denote by $L_{ij}=\lambda_j(x_i)$ the weak label provided by $\lambda_j$ for $x_i$. Here, $L_{ij}\in \{\mathcal{Y}\cup \{\emptyset\}\}$, where $\emptyset$ indicates that $\lambda_j$ abstains from labeling $x_i$. Subsequently, the weak labels are employed to train a label model $\Theta$, which gauges the accuracies of each LF and predicts a single probabilistic label $\tilde{y}_i$ per instance. Finally, the training dataset, coupled with the generated labels, is utilized to train the downstream model $f$.

\subsection{Shapley Value} \label{sec:shapley}
The Shapley value is based on cooperative game theory \cite{elkind2016cooperative}. Formally, a cooperative game is defined by a pair $(I,v)$, where $I=\{1,...,m\}$ denotes the set of players and $v:2^I\to \mathbb{R}$ is the utility function, which assigns a real value $v(S)$ to every coalition $S\subset I$. Furthermore, the utility of an empty coalition is set to 0, i.e., $v(\emptyset)=0$. Intuitively, the utility function defines how much payoff a set of players can achieve by forming a coalition. One central question in cooperative game theory is how to distribute the total payoff fairly among the players. The Shapley value \cite{shapley1953value} is a classical solution to this question, which assigns each player their average marginal contribution to the value of the predecessor set over every permutation of the player set. Formally, the Shapley value is defined as
\begin{equation} \label{eq:shap1}
    \phi_j^{Sh} = \frac{1}{|\Pi(I)|}\sum_{\pi\in \Pi(I)}\left[v(P_j^\pi\cup \{j\})-v(P_j^\pi)\right]
\end{equation}

Where $\Pi(I)$ denotes all possible permutations of $I$ and $P_j^\pi$ denotes the predecessor set of player $j$ in permutation $\pi$. Intuitively, suppose the players join the coalition randomly; the Shapley value for player j would be the expectation of their marginal contribution to the payoff.  An equivalent formulation is
\begin{equation} \label{eq:shap2}
    \phi_j^{Sh} = \frac{1}{m}\sum_{S\in I/\{j\}} \frac{1}{\binom{m-1}{|S|}}\left[v(S\cup \{j\})-v(S)\right]
\end{equation}

The Shapley value is theoretically appealing as it is the unique solution that satisfies the following desiderata simultaneously:

\begin{description}
    \item[Efficiency:]  The payoff of the full player set is completely distributed among all players, i.e., $\sum_{j\in I}\phi_j = v(I)$.

    \item[Null Player:] If a player contributes nothing to each coalition, then they should receive zero value, i.e., $[\forall S|j\notin S, v(S\cup \{j\})=v(S)]\Rightarrow \phi_j=0$.

    \item[Symmetry:] If two players play equal roles to each coalition, they should receive equal value, i.e., $[\forall S|i,j \notin S, v(S\cup \{i\})=v(S\cup \{j\}]\Rightarrow \phi_i=\phi_j$.

    \item[Additivity:] Given two coalition games $(I,v)$ and $(I,w)$ with different utility functions, the value a player receives under a coalition game $(I, v+w)$ is the sum of the values they receive under separate coalition games, i.e., $\phi_j(v+w)=\phi_j(v)+\phi_j(w)$.

\end{description}

In our context, the set of LFs corresponds to the players, while the accuracy of the downstream model on a holdout dataset, utilizing a coalition of LFs to label the training set, serves as the utility function. Under this framework, several desirable properties of the Shapley value emerge:
\begin{description}
 \item[Efficiency:] The accuracy of the downstream model is entirely attributed to all LFs.
 \item[Null player:] LFs contributing nothing (e.g., abstaining on all data) receive a score of zero.
 \item[Symmetry:] LFs contributing equally to downstream model accuracy receive identical scores.
 \item[Additivity:] This property facilitates efficient score computation when the downstream model is employed across multiple applications or datasets.
\end{description}
These properties are crucial for fair LF evaluation within the PWS framework, rendering the Shapley value an appealing solution.

\section{Our Method} \label{sec:method}
In this section, we first establish the formalization of the proxy PWS framework, then outline the cooperative game within this proxy framework. Subsequently, we derive the WeShap value, representing the Shapley value of LFs within the specified cooperative game structure. Next, we present an efficient approach for computing the WeShap value using dynamic programming and analyze its computational complexity. Finally, we introduce various application scenarios showcasing the utility of the WeShap value.

\begin{figure*}[tbp]
\centering
 \includegraphics[width=2\columnwidth]{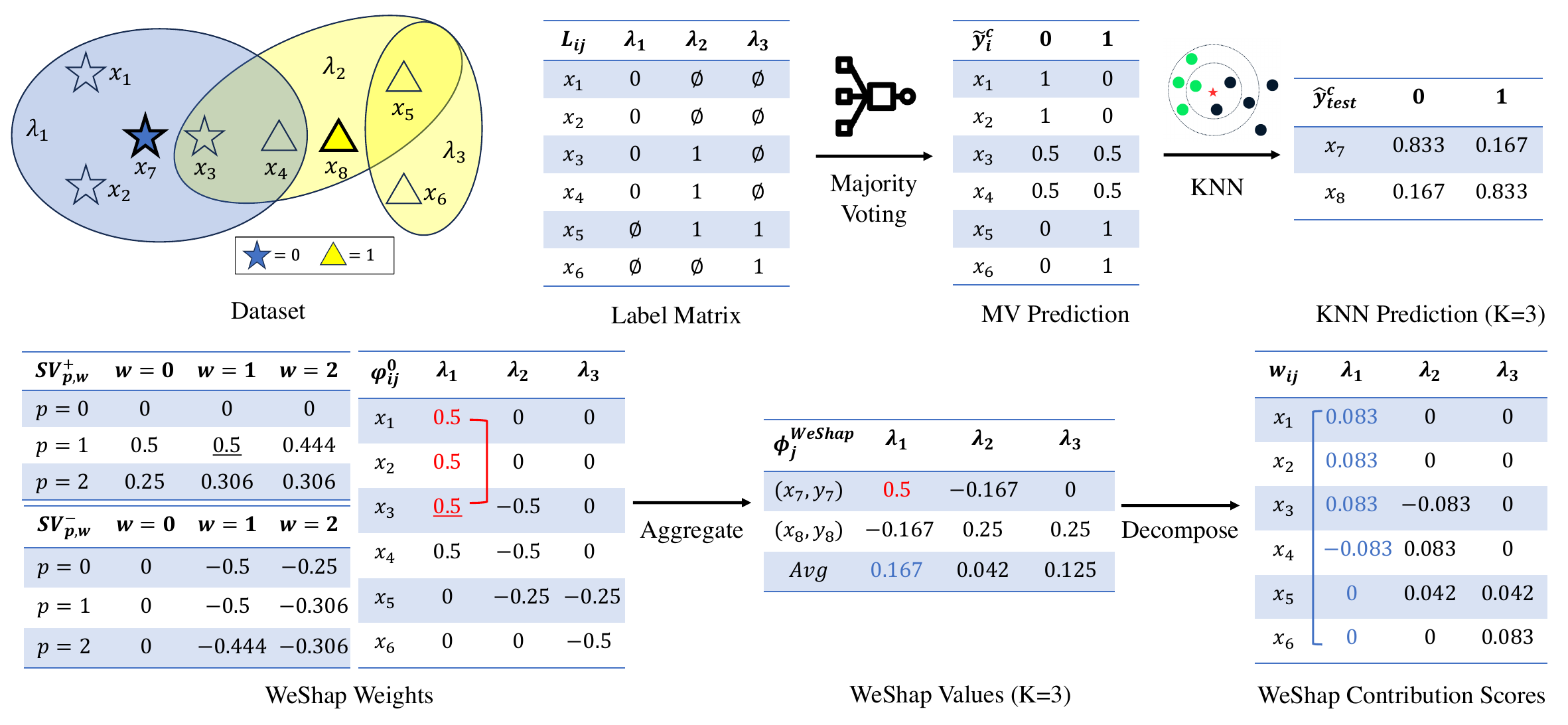}
 \caption{Illustration of the WeShap value computation.}\label{fig:weshap}
\end{figure*}

\subsection{Proxy Framework} \label{sec:proxy}
\begin{definition}[Proxy Framework]\label{def:proxy-framework}
The proxy framework is a programmatic weak supervision framework, where the label model uses majority voting (MV) to aggregate LF outputs, and the downstream model uses the K-nearest neighbors (KNN) algorithm to make predictions.
\end{definition}

For an LF set $\Lambda$, let $\Theta_{\Lambda}^c(x)$ denote the MV model's predicted probability that unlabeled training data $x$ belongs to class $c$:

\begin{equation}\label{eq:mv-pred}
\Theta_{\Lambda}^c(x) = \left\{
\begin{array}{lr}
\frac{\sum_{\lambda\in \Lambda}\mathds{1}(\lambda(x)=c)}{\sum_{\lambda\in \Lambda}\mathds{1}(\lambda(x)\neq \emptyset)}, & \sum_{\lambda\in \Lambda}\mathds{1}(\lambda(x)\neq \emptyset) > 0 \\
\frac{1}{C}, & \sum_{\lambda\in \Lambda}\mathds{1}(\lambda(x)\neq \emptyset) = 0
\end{array}
\right.
\end{equation}
In other words, if LFs are activated on the instance, the label model uses majority voting to predict its probabilistic label; otherwise, it predicts every class with equal probabilities. 

The downstream model in the proxy framework is a KNN classifier, where the user can arbitrarily specify the choice of K, the distance metric, and the weight function. Here, we assume that uniform weights are applied for simplicity. For a validation instance $x_{val}$, let $\mathcal{N}_K(x_{val})$ contains the K nearest neighbors of $x_{val}$ in the training set. The predicted probability of $x_{val}$ belongs to class $c$ is:
\begin{equation} \label{eq:knn-pred}
f_{\Lambda}^c(x_{val})=\frac{\sum_{i\in \mathcal{N}_K(x_{val})}\Theta_{\Lambda}^c(x_i)}{K}
\end{equation}

Our framework addresses multiclass scenarios but does not encompass multi-label classification, where multiple non-exclusive labels can be assigned to each instance. Consequently, the probabilities assigned to each class in Equation \ref{eq:knn-pred} sum to 1.

The accuracy of the downstream model is measured on a holdout dataset $D_{val}\sim \mathcal{P}$. Note that the accuracy defined in Equation \ref{eq:risk} differs from traditional classification accuracy, as it is based on probabilistic predictions rather than categorical ones.
\begin{equation}\label{eq:risk}
    Acc_{f, \Lambda}(D_{val}) = \frac{\sum_{(x,y)\in D_{val}} f_{\Lambda}^y(x)}{|D_{val}|}
\end{equation}

\begin{definition}[Proxy Game]\label{def:proxy-game}
The proxy game is a cooperative game $(I,v)$ defined under the proxy framework, where  $I=\{1,...,m\}$ denotes the set of LFs, and $v:2^I\rightarrow \mathds{R}$ maps an LF coalition to the accuracy gain of the proxy framework (measured by Equation \ref{eq:risk}) using that LF coalition compared to random prediction (i.e. with accuracy $\frac{1}{C}$).
\end{definition}

We use the accuracy gain instead of the absolute accuracy of the proxy framework to define the utility function $v$ to guarantee $v(\emptyset) = 0$, which is a prerequisite for Shapley value computation. 

\subsection{WeShap Value} \label{sec:weshap}
Within the proxy framework, we proceed to derive the formulation of the WeShap value, representing the Shapley values of LFs within the defined cooperative game structure. 

First, we consider the marginal utility of an LF $\lambda$ in classifying a data point $(x,y)$ with the MV label model. Notice that LFs that abstain on $(x,y)$ will not affect the prediction of the MV model, so we only consider LFs providing weak labels. Without loss of generality, we first consider the case where $\lambda(x)=y$. Suppose in a LF permutation, there are $a$ correct LFs and $b$ wrong LFs in the predecessor LF set of $\lambda$. The marginal utility of $\lambda$ on $(x,y)$ is:

\begin{equation}
\psi(a,b) = \left\{
    \begin{array}{lr}
    \frac{a+1}{a+b+1} - \frac{a}{a+b}, & a+b>0 \\
    1 - \frac{1}{C}, & a+b=0
\end{array}
\right.
\end{equation}

As a result, we can efficiently compute the average marginal contribution of $\lambda$ by enumerating the number of accurate and inaccurate LFs in the predecessor set of $\lambda$ over all possible LF permutations.

\begin{theorem} \label{theorem:mv}
Consider a coalition game $\mathcal{G}$ where a majority voting (MV) model utilizes a set of LFs to label a data point $(x,y)$. The player set $I=\{1,...,m\}$ denotes the set of LFs, and the utility function $v'$ maps an LF coalition to the accuracy gain of the MV model using the LFs compared to random prediction with accuracy $\frac{1}{C}$. Suppose  there are $p$ correct LFs and $w$ incorrect LFs on $(x,y)$, then the Shapley value of a correct LF $\lambda$ in the coalition game $\mathcal{G}$ is: 

\begin{equation}\label{eq:sv-pos}
{SV}_{p,w}^+=\frac{1}{p+w}\sum_{i=0}^{p-1}\sum_{j=0}^{w}\left[\psi(i,j)\frac{\binom{p-1}{i}\binom{w}{j}}{\binom{p+w-1}{i+j}}\right]
\end{equation}
\end{theorem}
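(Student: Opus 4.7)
The plan is to evaluate the permutation form of the Shapley value (Equation \ref{eq:shap1}) for the fixed correct LF $\lambda$ by grouping the orderings of the non-abstaining LFs according to how many correct and how many incorrect LFs precede $\lambda$. First I would reduce to the game restricted to non-abstaining LFs: any LF that abstains on $(x,y)$ leaves $\Theta_S^c(x)$ in Equation \ref{eq:mv-pred} invariant for every coalition $S$, so it is a null player in $\mathcal{G}$ and may be deleted without affecting the Shapley value of $\lambda$. Hence we may assume the player set has exactly $p+w$ elements.

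Next I would argue that the marginal contribution $v'(P\cup\{\lambda\})-v'(P)$ depends on the predecessor $P$ only through the counts $(i,j)$ of correct and incorrect LFs in $P$. Under the MV rule, adding a correct LF to a coalition with $i$ correct and $j$ incorrect votes raises the probability assigned to class $y$ from $i/(i+j)$ to $(i+1)/(i+j+1)$ when $i+j>0$, and from the uniform baseline $1/C$ to $1$ when $i+j=0$; this is precisely $\psi(i,j)$. Standard counting then gives that the number of permutations in which $\lambda$'s predecessor contains exactly $i$ correct LFs (from the $p-1$ remaining correct LFs) and $j$ incorrect LFs equals
\[
\binom{p-1}{i}\binom{w}{j}\,(i+j)!\,(p+w-1-i-j)!,
\]
since we choose the identities of the predecessors and then order predecessors and successors freely.

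To finish, I would divide by $(p+w)!$ and use $(p+w)! = (p+w)\cdot\binom{p+w-1}{i+j}\cdot(i+j)!\,(p+w-1-i-j)!$ to collapse the weight in front of $\psi(i,j)$ to $\binom{p-1}{i}\binom{w}{j}\big/\big[(p+w)\binom{p+w-1}{i+j}\big]$; summing over $0\le i\le p-1$ and $0\le j\le w$ yields the stated expression for $SV_{p,w}^+$.

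I do not expect a serious technical obstacle. The one point that requires care is the boundary case $(i,j)=(0,0)$, where the utility baseline flips from uniform $1/C$ prediction to a confident MV vote contributing $1-1/C$; packaging both branches inside the definition of $\psi$ keeps the double sum uniform and avoids a case split in the counting argument.
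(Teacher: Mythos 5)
Your proposal is correct and follows essentially the same route as the paper's proof: group permutations by the counts $(i,j)$ of correct and incorrect predecessors of $\lambda$, observe that the marginal contribution in each group is $\psi(i,j)$, count the permutations as $\binom{p-1}{i}\binom{w}{j}(i+j)!(p+w-1-i-j)!$, and normalize by $(p+w)!$. The only additions you make are the explicit null-player justification for discarding abstaining LFs and the algebraic identity collapsing the factorials into $(p+w)\binom{p+w-1}{i+j}$, both of which the paper leaves implicit.
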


\begin{proof}
Consider all permutations of LFs that produce weak labels on $(x,y)$. In a permutation with $i$ correct LFs and $j$ incorrect LFs before $\lambda$, the marginal utility of $\lambda$ is $\psi(i,j)$. The number of permutations with $i$ correct LFs and $j$ incorrect LFs before $\lambda$ is $\binom{p-1}{i}\binom{w}{j}(i+j)!(p+w-1-i-j)!$, as we need to select $i$ correct LFs from the other $p-1$ correct LFs, $j$ incorrect LFs from the $w$ incorrect LFs, and consider the order of LFs in the permutation. Following the definition of Shapley value in Equation \ref{eq:shap1}, the Shapley value of $\lambda$ in $\mathcal{G}$ is
\begin{equation}\label{eq:permutation_shap}
\sum_{i=0}^{p-1}\sum_{j=0}^w\left[\frac{\psi(i,j)*\binom{p-1}{i}\binom{w}{j}(i+j)!(p+w-1-i-j)!}{(p+w)!}\right]
\end{equation}

This equals to ${SV}_{p,w}^+$ expressed in Equation \ref{eq:sv-pos}.
\end{proof}

As the Shapley value has the efficiency property, we can calculate the Shapley value of an inaccurate LF on $(x,y)$ as

\begin{equation}\label{eq:sv-neg}
{SV}_{p,w}^- = \frac{\frac{p}{p+w}-\frac{1}{C}-p*{SV}_{p,w}^+}{w}, \quad\quad  w>0
\end{equation}

We use $\varphi_{ij}^c$ to denote the Shapley value of $\lambda_j$ with respect to the MV label model in classifying $x_i$ when the hidden label of $x_i$ is $c$. The $\varphi_{ij}^c$ value can be computed directly as

\begin{equation} \label{eq:weshap-weights}
  \varphi_{ij}^c = \left\{
  \begin{array}{lr}
        SV_{p_i, w_i}^+, & \lambda_j(x_i)=c \\
        0, & \lambda_j(x_i)=\emptyset \\
        SV_{p_i, w_i}^-, & otherwise\\
  \end{array}
  \right.
\end{equation}

where $p_i$ and $w_i$ denote the number of correct and incorrect LFs in $\Lambda$ on $(x_i, y_i)$ respectively. We name the $\varphi_{ij}^c$ values \textbf{WeShap weights}, which correspond to the average contribution of $\lambda_j$ in classifying $x_i$ with respect to the MV model, when the hidden label of $x_i$ is $c$. 

\begin{example}
Figure \ref{fig:weshap} illustrates a running example, where we consider three LFs denoted as $\lambda_1$ through $\lambda_3$, each contributing to the labeling of data instances. The unlabeled training points, $x_1$ through $x_6$, are augmented with labeled validation points, $x_7$ and $x_8$, utilized for LF evaluation. 

In the bottom left of Figure \ref{fig:weshap}, we demonstrate the WeShap weights of the LFs on unlabeled training points. As a concrete example, we calculate the Weshap weight of $\lambda_1$ on $x_3$ when $y_3=0$. Notice that 1 correct LF ($\lambda_1$) and 1 wrong LF ($\lambda_2$) are activated on it. Following Equation \ref{eq:sv-pos} and \ref{eq:weshap-weights}, the Weshap weight of $\lambda_1$ on $x_3$ is $\frac{\psi(0,0)+\psi(0,1)}{2}=0.5$. The first part corresponds to the marginal utility of $\lambda_1$ in the permutation $[\lambda_1, \lambda_2]$, and the second part corresponds to that in $[\lambda_2, \lambda_1]$. The WeShap weight indicates that $\lambda_1$'s marginal contribution to the classification of $x_3$ is 0.5, averaging across all LF permutations under the MV model.  
\end{example}

Next, we focus on the downstream KNN model. Given a validation instance $(x_{val}, y_{val})$, let $\mathcal{N}_K(x_{val})$ denote the K-nearest neighbors of $x_{val}$ in the training dataset. The \textbf{WeShap value} of $\lambda_j$ on $(x_{val}, y_{val})$ is defined as:

\begin{equation}\label{eq:weshap-instance}
    \phi_j^{WeShap}(x_{val}, y_{val}) = \frac{1}{K}\sum_{i\in \mathcal{N}_K(x_{val})} \varphi_{ij}^{y_{val}}
\end{equation}

Similarly, the WeShap value of an LF for a holdout dataset $D_{val}$ is defined as the average WeShap value of the LF with respect to the instances inside the dataset:

\begin{equation}\label{eq:weshap-dataset}
\phi_j^{WeShap}(D_{val}) = \frac{\sum_{(x,y)\in D_{val}}\phi_j^{WeShap}(x, y)}{|D_{val}|}
\end{equation}

\begin{theorem}\label{theorem:weshap}
The WeShap value of an LF is equal to its Shapley value in the proxy game using the same set of LFs and holdout dataset.
\end{theorem}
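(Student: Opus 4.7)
The plan is to decompose the utility function of the proxy game into a convex combination of single-instance majority-voting utility functions, and then exploit the additivity property of Shapley values together with Theorem~\ref{theorem:mv} to read off the formula from the WeShap weights.

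First, I would expand the definition of $v(S)$ for an arbitrary coalition $S\subseteq I$. Substituting Equation~\ref{eq:risk} and Equation~\ref{eq:knn-pred} gives
\begin{equation*}
v(S) \;=\; \frac{1}{|D_{val}|}\sum_{(x,y)\in D_{val}}\frac{1}{K}\sum_{i\in \mathcal{N}_K(x)}\Theta_S^y(x_i)\;-\;\frac{1}{C}.
\end{equation*}
For each training index $i$ and candidate class $c$, define the auxiliary utility function $u_i^c(S) := \Theta_S^c(x_i) - \frac{1}{C}$. By Equation~\ref{eq:mv-pred}, $u_i^c(\emptyset)=0$, so each $(I,u_i^c)$ is a legitimate cooperative game, and its Shapley value for player $j$ is exactly $\varphi_{ij}^c$ by Theorem~\ref{theorem:mv} (with $c$ playing the role of the ``hidden label'').

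Next I would rewrite $v$ as a linear combination of these sub-games. Using $\Theta_S^y(x_i) = u_i^y(S) + \tfrac{1}{C}$ and the fact that the constant $\tfrac{1}{C}$ terms collapse against the $-\tfrac{1}{C}$ at the end, we obtain
\begin{equation*}
v(S) \;=\; \sum_{(x,y)\in D_{val}}\;\sum_{i\in \mathcal{N}_K(x)} \frac{1}{K\,|D_{val}|}\, u_i^y(S),
\end{equation*}
which is a finite linear combination of cooperative games on the common player set $I$. By additivity (and homogeneity, which follows by applying additivity to rational scalings and passing to limits, or more directly from Equation~\ref{eq:shap2}), the Shapley value of $j$ under $v$ equals the same linear combination of the Shapley values of $j$ under each $u_i^y$. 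Plugging in $\phi_j^{Sh}(u_i^y)=\varphi_{ij}^y$ and collapsing the inner sum via Equation~\ref{eq:weshap-instance} yields $\phi_j^{WeShap}(D_{val})$ as defined in Equation~\ref{eq:weshap-dataset}.

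The main subtlety, and the step I would be most careful about, is justifying that Theorem~\ref{theorem:mv} applies to each $u_i^c$ for an arbitrary class $c$ rather than just the true label: the statement of Theorem~\ref{theorem:mv} is phrased in terms of the ``true'' label of a point, but its proof only uses the partition of $\Lambda$ into LFs whose output equals $c$, LFs whose output differs from $c$ (and is not $\emptyset$), and abstaining LFs. Thus the same derivation applies to any target class $c$, and the WeShap weight $\varphi_{ij}^c$ in Equation~\ref{eq:weshap-weights} is legitimately the per-instance Shapley value. Beyond that observation, the argument is just bookkeeping with additivity, so no further obstacle arises.
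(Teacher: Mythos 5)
Your proposal is correct and follows essentially the same route as the paper: both decompose the proxy game's utility into a linear combination of per-training-instance majority-voting games, apply Theorem~\ref{theorem:mv} to each, and conclude via additivity of the Shapley value. If anything, you are slightly more careful than the paper's own proof, which invokes only ``additivity'' where homogeneity under the $\tfrac{1}{K}$ and $\tfrac{1}{|D_{val}|}$ scalings is also needed, and which does not explicitly note that Theorem~\ref{theorem:mv} applies for an arbitrary target class $c$.
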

\begin{proof}
Let's focus on a single validation instance $(x_{val}, y_{val})$ first. The utility of a coalition of LFs $S\subset \Lambda$ in the proxy game is:
\begin{equation}\label{eq:weshap-derivation}
\begin{aligned}
 v(S) &= f^{y_{val}}_{S}(x_{val}) - \frac{1}{C}  & \text{(Definition \ref{def:proxy-game})}\\
      &= \frac{1}{K}\left[\sum_{i\in \mathcal{N}_K(x_{val})}\Theta_S^{y_{val}}(x_i)\right]- \frac{1}{C} & \text{(Equation \ref{eq:knn-pred})}\\ 
      &= \frac{1}{K}\left[\sum_{i\in \mathcal{N}_K(x_{val})}\left(\Theta_S^{y_{val}}(x_i)-\frac{1}{C}\right)\right]& \\
 \end{aligned}
\end{equation}

Next, we define K coalition games $\{\mathcal{G}_i: i\in \mathcal{N}_K(x_{val})\}$. In each game, the LFs are used to classify one point $(x_i, y_{val})$ using the MV label model. The utility of each coalition game is defined as the label model's predictive accuracy on $(x_i, y_{val})$ minus $\frac{1}{C}$. 

Notice that $\Theta_S^{y_{val}}(x_i)-\frac{1}{C}$ in Equation \ref{eq:weshap-derivation} is the utility of $S$ in game $\mathcal{G}_i$. Following the additivity property of the Shapley value, the Shapley value of an LF $\lambda_j$ in the proxy game is the average of the Shapley values it receives in $\{\mathcal{G}_i: i\in \mathcal{N}_K(x_{val})\}$. Following Theorem \ref{theorem:mv}, the Shapley value of $\lambda_j$ receives in $\mathcal{G}_i$ is $\varphi_{ij}^{y_{val}}$. Therefore, the Shapley value of $\lambda_j$ in the proxy game is $\frac{1}{K}\sum_{i\in \mathcal{N}_K(x_{val})} \varphi_{ij}^{y_{val}}$, which is exactly the WeShap value. 

Since we have proved the WeShap value is the Shapley value of a LF in the proxy game when $D_{val}$ contains a single instance, following the additivity property of Shapley value and the definition in Equation \ref{eq:weshap-dataset}, we can conclude that the WeShap value is also the Shapley value of a LF when $D_{val}$ contains multiple instances.
\end{proof}

The WeShap value can also be generalized to KNN with non-uniform weights. To apply weighted KNN, we can simply modify Equation \ref{eq:weshap-instance} as:

\begin{equation}\label{eq:weshap-instance-weighted}
    \phi_j^{WeShap}(x_{val}, y_{val}) = \frac{\sum_{i\in N_K(x_{val})}\left(\omega(x_i, x_{val})* \varphi_{ij}^{y_{val}}\right)}{\sum_{i\in N_K(x_{val})}\omega(x_i, x_{val})}
\end{equation}

Where $w(x_i, x_{val})$ specify the weight given to $x_i$ when predicting $x_{val}$.

Finally, we can decompose the WeShap value in another way, enabling us to inspect each weak label's contribution. We define the \textbf{WeShap contribution scores} as:

\begin{equation} \label{eq:weshap-contribution}
   w_{ij} = \frac{\sum_{(x_{val},y_{val})\in D_{val}: i\in \mathcal{N}_K(x_{val})}\varphi_{ij}^{y_{val}}}{|D_{val}|*K}
\end{equation}

The WeShap contribution scores serve as detailed metrics quantifying the impact of weak labels $\lambda_j(x_i)$ on the accuracy of the proxy PWS pipeline. A higher contribution score suggests greater assistance to the pipeline's accuracy. 


\begin{example}
In Figure \ref{fig:weshap}, Following Equation \ref{eq:weshap-instance}, the WeShap score of $\lambda_1$ on $(x_7, y_7)$ (marked in red) is the average of $\{\varphi_{1,1}^0, \varphi_{2,1}^0, \varphi_{3,1}^0\}$, as $\{x_1, x_2, x_3\}$ are the KNNs of $x_7$ in the training data and $y_7=0$. The result is 0.5, indicating that $\lambda_1$ makes an average contribution of 0.5 in the proxy framework for predicting $x_7$. 

The WeShap Contribution Scores table at the bottom right of Figure \ref{fig:weshap} is the decomposition of WeShap values. We observe negative scores for $w_{32}$ and $w_{41}$, indicating detrimental effects on the downstream model's performance. $w_{32}$ is negative because $\lambda_2$ misclassify $x_3$ as class 1 and hurts the prediction of $x_7$ in $D_{val}$. Similarly, $w_{41}$ is negative because $\lambda_1$ misclassify $x_4$ as class 0 and hurts the prediction of $x_8$. 
Accordingly, we can enhance the PWS pipeline by excluding these weak labels (setting them to $\emptyset$). Following this adjustment, the downstream model achieves a perfect accuracy of 1.0, underscoring the effectiveness of WeShap contribution scores in refining the PWS pipeline.
\end{example}

\subsection{Computational Complexity}\label{sec:complexity}

\begin{figure}[htbp]
\centering
 \includegraphics[width=0.9\columnwidth]{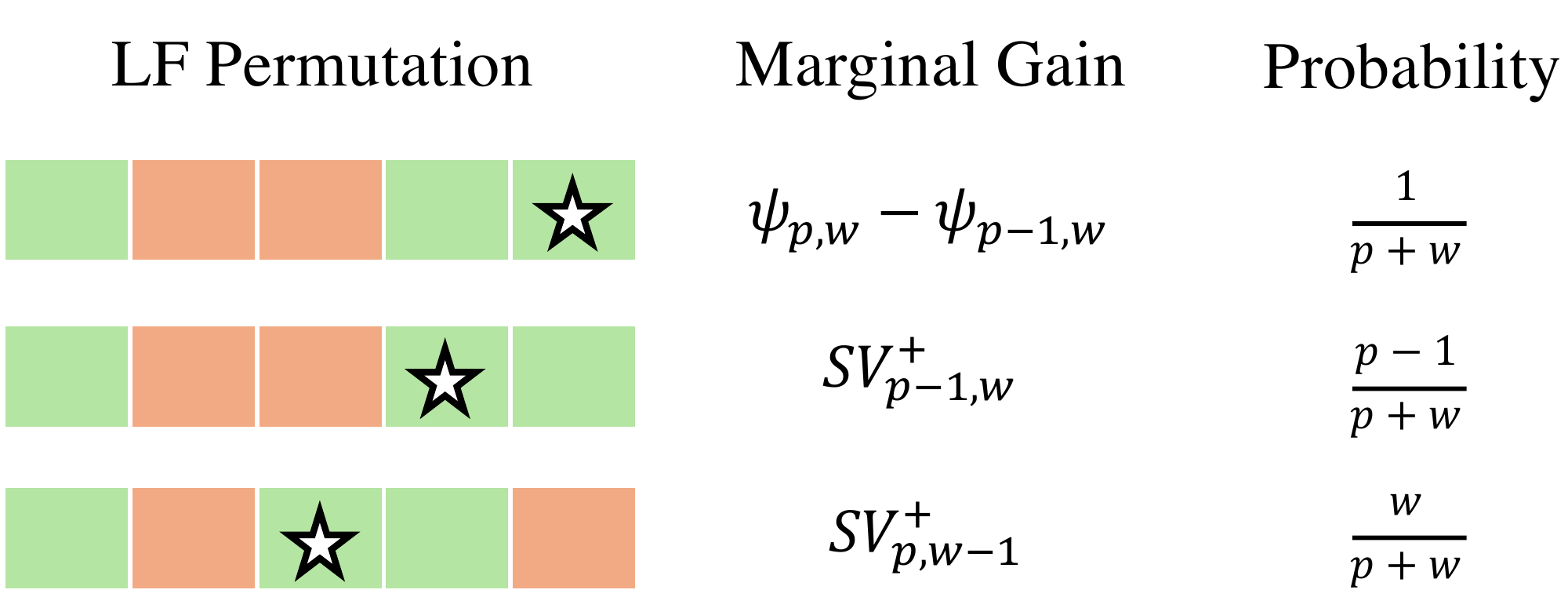}
 \caption{Efficient computation of WeShap scores using dynamic programming. The target LF $\lambda$ is denoted in star; green and red cells represent LFs making correct and wrong predictions, respectively. }\label{fig:dp-formula}
\end{figure}

The computational complexity of computing the WeShap value primarily stems from calculating the WeShap weights outlined in Equation \ref{eq:weshap-weights} and identifying the K-nearest neighbors for each validation instance. Brute-force KNN search entails a complexity of $\mathcal{O}(n_{val}dn)$ (where $d$ is the feature dimensionality), but this can be optimized to $\mathcal{O}(n_{val}d\log{n})$ by employing spatial data structures like KD-Trees \cite{bentley1975multidimensional}, with an additional $\mathcal{O}(n\log{n})$ complexity for constructing the KD-Tree.

Let's focus on computing the WeShap weights, which can be derived from $SV_{p,w}^+$ and $SV_{p,w}^-$ values. Directly computing all $SV_{p,w}^+$ values following Equation \ref{eq:sv-pos} requires $\mathcal{O}(m^4)$ computations, but this can be optimized further using dynamic programming. As illustrated in Figure \ref{fig:dp-formula}, we categorize all possible permutations of LFs into three cases:
\begin{itemize}
\item    The target LF $\lambda$ is the last in the permutation;
\item    Some other correct LF is the last in the permutation;
\item    Some incorrect LF is the last in the permutation.
\end{itemize}
In the first case, the marginal contribution of $\lambda$ is $\psi(p-1,w)$; in the second case, the average marginal contribution of $\lambda$ is $SV_{p-1,w}^+$; and in the last case, the average marginal contribution of $\lambda$ is $SV_{p, w-1}^+$. This yields the following recursive formula:

\begin{equation} \label{eq:recursive}
\begin{aligned}
SV_{p,w}^+&=\frac{\psi(p-1,w)}{p+w} + \frac{p-1}{p+w}SV_{p-1,w}^+ + \frac{w}{p+w}SV_{p,w-1}^+
\end{aligned}
\end{equation}

with the boundary values being
\begin{equation}
SV_{p,0}^+=\frac{C-1}{C*p}, \quad
SV_{0,w}^+= 0
\end{equation}

Leveraging the recursive formula, we can compute all $SV_{p,w}^+$ values in $\mathcal{O}(m^2)$ computations. The $SV_{p,w}^-$ values can be computed in $\mathcal{O}(m^2)$ computations using Equation \ref{eq:sv-neg}. These values are then mapped to the WeShap weights. Algorithm \ref{alg:weshap} demonstrates the algorithm to compute WeShap values using dynamic programming. The algorithm first utilizes Equation \ref{eq:recursive} to compute WeShap weights (Line 2-7). Then, it learns a KNN model to get the K nearest neighbors of each validation instance (Line 8) and aggregates WeShap weights to compute the WeShap contribution scores (Line 9-20). Finally, it aggregates WeShap contribution scores to get the WeShap values of each LF (Line 21-22). Assuming data structures like KD-Trees are applied for KNN search, the total computational complexity of Algorithm \ref{alg:weshap} is $\mathcal{O}(n\log{n}+n_{val} d\log{n}+Kn_{val}m+m^2)$, 
which generalizes well to large LF sets, improving significantly over the original $\mathcal{O}(2^m)$ time complexity for computing exact Shapley values.

\begin{algorithm}[tbp]
\DontPrintSemicolon
  
\KwInput{
$D_{train}$: unlabeled training dataset\newline
$D_{val}$: labeled holdout dataset \newline
$\Lambda$: labeling function set \newline
$C$: number of candidate classes \newline
$K$: number of neighbors in the proxy KNN model \newline
$\delta$: distance metric in the proxy KNN model \newline
$\omega$: weight function in the proxy KNN model 
}
\KwOutput{$\phi_j^{WeShap}$: WeShap values}

$n, m, n_{val} \leftarrow |D_{train}|, |\Lambda|, |D_{val}|$

$SV^+_{*,*}, w_{*,*} \leftarrow 0$

\For {$p \leftarrow 1$ to $m$}
{
    $SV^+_{p,0} \leftarrow \frac{C-1}{C*p}$
    
    \For {$w \leftarrow 1$ to $m$}
    {
        $SV^+_{p,w}\leftarrow \frac{\psi(p-1,w)}{p+w} + \frac{p-1}{p+w}SV_{p-1,w}^+ + \frac{w}{p+w}SV_{p,w-1}^+$

        $SV^-_{p,w} \leftarrow \frac{\frac{p}{p+w}-\frac{1}{C}-p*{SV}_{p,w}^+}{w}$
    } 
}

$f \leftarrow KNN(D_{train}, K, \delta)$

\For {$(x_{val}, y_{val}) \in D_{val}$}
{
    $\mathcal{N}_K (x_{val}) \leftarrow f.KNeighbors(x_{val})$

    $R \leftarrow n_{val} * \sum_{x_i \in \mathcal{N}_K (X_{val})} \omega(x_i, x_{val})$

    \For {$x_i \in \mathcal{N}_K (X_{val})$}
    {   
        $p_i \leftarrow \sum_{\lambda} \mathds{1}(\lambda(x_i)=y_{val})$

        $w_i \leftarrow \sum_{\lambda}\mathds{1}(\lambda(x_i)\neq \emptyset) - p_i$

        \For {$\lambda_j \in \Lambda$}
        {
            \eIf{$\lambda_j(x_i)=y_{val}$}
            {
                $w_{ij} \leftarrow w_{ij} + SV^+_{p_i,w_i}*\frac{\omega(x_i, x_{val})}{R}$
            }
            {
               \If{$\lambda_j(x_i)\neq \emptyset$}
               {
                    $w_{ij} \leftarrow w_{ij} + SV^-_{p_i,w_i}*\frac{\omega(x_i, x_{val})}{R}$
               }
            }
        }   
    }
}

\For {$j \leftarrow 1$ to $m$}
{
    $\phi_j^{WeShap} \leftarrow \sum_i w_{ij}$
}

\caption{Dynamic Programming Algorithm for WeShap Value Computation}
\label{alg:weshap}
\end{algorithm}

\subsection{Use Cases} \label{sec:use-cases}
The WeShap value proves valuable in several applications, elucidated in this section.


\textbf{Identify Helpful/Harmful LFs:} Higher WeShap values indicate greater contributions to the proxy pipeline, suggesting helpfulness, while negative values imply potential harm. This aids in filtering out detrimental LFs and optimizing resource allocation. For instance, if LFs stem from multiple supervision sources, users can allocate more resources to the sources yielding the most beneficial LFs.

\textbf{Enhance Downstream Model Accuracy:} We can refine LF outputs by silencing those with WeShap contribution scores below a threshold $\theta$:

\begin{equation}
\tilde{L}_{ij}=\left\{
\begin{array}{lr}
L_{ij} & w_{ij} \geq \theta \\
\emptyset & w_{ij} < \theta \\
\end{array}
\right.
\end{equation}
The threshold $\theta$ is tuned to optimize the PWS pipeline accuracy on the validation set. 

\textbf{Understanding PWS Pipeline Behaviors:}  WeShap scores identify LFs and training points with the highest or lowest contributions to validation instances, aiding in comprehending pipeline behaviors and diagnosing mispredictions. This is particularly useful when an LF indirectly influences predictions through nearby instances.

\textbf{Fair Credit Allocation:} Leveraging Shapley value properties (Section \ref{sec:shapley}), WeShap ensures fair attribution of credits to each LF, valuable when distributing payoffs among multiple contributors.

\section{Experiments} \label{sec:exp}

We have undertaken comprehensive experimentation to assess the efficacy of WeShap values across a spectrum of downstream tasks. These tasks encompass identifying beneficial LFs, enhancing the PWS pipeline's accuracy, and analyzing its behaviors. 

\subsection{Setup}\label{sec:exp-setup}
\begin{table*}[htbp]
\caption{Dataset summary statistics.}\label{tab:dataset}
\resizebox{2\columnwidth}{!}{%
\begin{tabular}{cc|cccc|ccccc}
\toprule[1.5pt]
\multirow{2}{*}{\textbf{Dataset}} & \multirow{2}{*}{\textbf{Task}} & \multicolumn{4}{c}{\textbf{Dataset Statistics}}                          & \multicolumn{5}{c}{\textbf{LF Statistics}}                                         \\ \cline{3-11} 
                                  &                                & \textbf{\#Class} & \textbf{\#Train} & \textbf{\#Valid} & \textbf{\#Test} & \textbf{\#LF} & \textbf{Acc} & \textbf{Cov} & \textbf{Overlap} & \textbf{Conflict} \\ \hline
YouTube                           & spam classification            & 2                & 1686             & 120              & 250             & 100           & 0.875        & 0.038        & 0.037            & 0.011             \\ \hline
IMDB                              & sentiment analysis             & 2                & 20000            & 2500             & 2500            & 100           & 0.666        & 0.026        & 0.024            & 0.017             \\ \hline
Yelp                              & sentiment analysis             & 2                & 30400            & 3800             & 3800            & 100           & 0.698        & 0.027        & 0.024            & 0.013             \\ \hline
TREC                              & question classification        & 6                & 5033             & 500              & 500             & 100           & 0.526        & 0.039        & 0.037            & 0.031             \\ \hline
MedAbs                            & disease classification         & 5                & 6002             & 3095             & 2657            & 100           & 0.444        & 0.025        & 0.023            & 0.018             \\ \hline
Mushroom                          & mushroom classification        & 2                & 6499             & 812              & 813             & 56            & 0.793        & 0.271        & 0.271            & 0.216             \\ \hline
Census                            & income classification          & 2                & 10083            & 5561             & 16281           & 83            & 0.787        & 0.054        & 0.053            & 0.015             \\ \hline
Indoor-Outdoor                    & image classification           & 2                & 1226             & 408              & 410             & 226           & 0.921        & 0.043        & 0.043            & 0.012             \\ \hline
VOC07-Animal                      & image classification           & 2                & 4008             & 1003             & 4952            & 296           & 0.950        & 0.042        & 0.042            & 0.012             \\ \bottomrule[1.5pt]
\end{tabular}}
\end{table*}

\paragraph{Datasets.} We evaluate our work extensively on 9 datasets, including 5 datasets for text classification (YouTube 
 \cite{alberto2015tubespam}, IMDB \cite{maas2011learning}, Yelp \cite{zhang2015character}, TREC \cite{li2002learning}, MedAbs \cite{schopf2022evaluating}), 2 datasets for tabular classification (Census \cite{misc_census_income_20}, Mushroom (MUSH) \cite{misc_mushroom_73}), and 2 datasets for image classification (Indoor-Outdoor (IND) \cite{tok2021practical}, and VOC07-Animal (VOC-A) \cite{pascal-voc-2007} \footnote{This is a binary classification version of the VOC-2007 dataset detecting whether an animal exist in the picture or not.}). These datasets have been widely used to evaluate PWS pipelines in prior works \cite{zhang2021wrench,zhang2022understanding,tok2021practical, hsieh2022nemo}.
 Table \ref{tab:dataset} outlines the dataset details and LF statistics. To assess LF quality in a scaled context, we generated LFs for the datasets using specific criteria:

For textual datasets, we employed LFs denoted as $\lambda_{k,c}$, which label class $c$ upon detecting a unigram $k$ in the text.

For tabular datasets, we utilized LFs denoted as $\lambda_{e,c}$, where class $c$ is assigned based on the truth value of expression $e$. We designed LFs for the Mushroom dataset and utilized the LF set introduced by \cite{awasthi2019learning} for the Census dataset. 

For image datasets, we employed the Azure Image Tagging API \footnote{\url{https://learn.microsoft.com/en-us/azure/ai-services/computer-vision/concept-tagging-images}(last accessed: 01/11/2024)} to associate tags with images corresponding to their visual features (e.g., sky, plant). Subsequently, we considered LFs denoted as $\lambda_{t,c}$, which assign class $c$ based on the existence of tag $t$.

We ensured LF quality by maintaining their accuracy at least 0.1 above random guessing on validation sets.

 \paragraph{PWS Pipeline.}
 We follow the standard PWS pipeline: train the label model on unlabeled data using LFs, exclude instances without active LFs, and then train the downstream model on remaining data with label model predictions. We evaluate pipeline performance via downstream model accuracy on the test set.

We assess two label models: majority voting and Snorkel MeTaL \cite{ratner2019training} implemented in WRENCH \cite{zhang2021wrench}. For downstream models, we consider two scenarios:

\textbf{(1) Feature extraction:} Use a frozen pretrained model to extract features, then train an end model on these features.

\textbf{(2) Fine-tuning:} Directly fine-tune the downstream model on weakly labeled data.

For feature extraction, we use dataset-specific feature extractors (Sentence-BERT \cite{reimers2019sentencebert} for YouTube, TREC, MedAbs; Bertweet-sentiment \cite{perez2021pysentimiento} for IMDB, Yelp; ResNet-50 \cite{he2016deep} for image datasets). We then train a logistic regression end model. For fine-tuning, we use BERT base \cite{devlin2018bert} for text and ResNet-50 \cite{he2016deep} for images. We set $batch\_size=32, n\_epochs=5, lr=5e-5, weight\_decay=0$ when fine-tuning BERT and $batch\_size=64, n\_epochs=50, lr=1e-4, weight\_decay=1e-5$ when fine-tuning the ResNet-50 model with an AdamW optimizer \cite{loshchilov2018decoupled} and apply early stopping technique to prevent overfitting. We don't fine-tune on tabular datasets lacking corresponding pretrained models. Each experiment is repeated five times with different seeds, reporting averaged results.

 \paragraph{Baselines. } We compare the following methods. 
 \begin{description}
    \item[Random (RND):] Assigns random values to LFs as a baseline.

    \item[Accuracy (ACC):] Evaluates LFs by validation set accuracy: $V(\lambda_j)=Acc(\lambda_j)$.

    \item[Coverage (COV):] Evaluates LFs by validation set coverage: $V(\lambda_j)=Cov(\lambda_j)$.

    \item [IWS \cite{boecking2020interactive}:]  Combines accuracy and coverage: $V(\lambda_j)=(2*Acc(\lambda_j)-1)*Cov(\lambda_j)$.
    This corresponds to evaluating the LF based on the correct prediction count minus the wrong prediction count on the validation set.

    \item [MC-Shap \cite{castro2009polynomial}:] Approximates Shapley value using Monte Carlo permutation sampling, evaluating gain by end model accuracy on validation set. To balance approximation accuracy and runtime cost, we use $n=100$ samples in our evaluation.

    \item[SIF \cite{zhang2022understanding}:] Learns fine-grained source-aware influence functions computed on the validation set: $V(\lambda_j)=|\sum_{i,c} \bar{\phi}_{i,j,c}|$, where $\bar{\phi}_{i,j,c}$ is the source-aware influence function of $\lambda_j$ on $x_i$ with respect to class c.
    


    \item[WeShap:] Our proposed method using Shapley values. We optimize K (5-40), distance metric, and weight function for each dataset based on proxy KNN classifier accuracy on the validation set.
    
 \end{description}
 
 \subsection{LF Evaluation}
\begin{table*}[htbp]
\caption{Average downstream model accuracy after ranking LFs based on different metrics.}
\label{tab:lf-rank}
\resizebox{2\columnwidth}{!}{%
\begin{tabular}{cccccccccccc}
\toprule[1.5pt]
LM & Metric & Youtube & IMDB & Yelp & MedAbs & TREC & MUSH & Census & IND & VOC-A & AVG \\ \hline
\multirow{7}{*}{MV} & RND & 0.825 & 0.831 & 0.877 & 0.497 & 0.536 & 0.876 & 0.782 & 0.889 & 0.934 & 0.783 \\ \cline{2-12} 
 & ACC & 0.813 & \textbf{0.842} & 0.888 & 0.489 & 0.583 & 0.840 & 0.809 & 0.884 & 0.940 & 0.787 \\ \cline{2-12} 
 & COV & 0.834 & 0.838 & 0.849 & 0.498 & 0.574 & 0.834 & 0.798 & 0.885 & 0.937 & 0.783 \\ \cline{2-12} 
 & IWS & 0.789 & 0.840 & 0.856 & 0.485 & \textbf{0.605} & \textbf{0.900} & \textbf{0.818} & 0.885 & 0.946 & 0.792 \\ \cline{2-12} 
 & MC-Shap & 0.818 & 0.837 & 0.858 & \textbf{0.556} & 0.539 & 0.895 & 0.813 & \textbf{0.910} & 0.930 & 0.795 \\ \cline{2-12} 
 & SIF & \textbf{0.854} & 0.838 & 0.872 & 0.507 & 0.587 & 0.885 & 0.802 & 0.888 & 0.951 & 0.798 \\ \cline{2-12} 
 & WeShap & 0.844 & 0.836 & \textbf{0.900} & 0.522 & 0.566 & 0.893 & \textbf{0.818} & 0.901 & \textbf{0.952} & \textbf{0.803} \\ \hline
\multirow{7}{*}{Snorkel} & RND & 0.735 & 0.778 & 0.690 & 0.434 & 0.475 & 0.835 & 0.781 & 0.879 & 0.882 & 0.721 \\ \cline{2-12} 
 & ACC & 0.667 & 0.796 & 0.722 & 0.445 & 0.504 & 0.864 & 0.778 & 0.877 & 0.922 & 0.730 \\ \cline{2-12} 
 & COV & 0.685 & 0.814 & 0.621 & 0.452 & 0.479 & 0.829 & 0.763 & 0.883 & 0.883 & 0.712 \\ \cline{2-12} 
 & IWS & 0.665 & \textbf{0.821} & 0.629 & 0.441 & \textbf{0.544} & \textbf{0.892} & 0.758 & 0.886 & 0.893 & 0.725 \\ \cline{2-12} 
 & MC-Shap & 0.811 & 0.782 & 0.743 & 0.466 & 0.480 & 0.886 & 0.778 & 0.891 & \textbf{0.948} & 0.754 \\ \cline{2-12} 
 & SIF & 0.727 & 0.793 & 0.741 & 0.440 & 0.460 & 0.882 & \textbf{0.793} & 0.883 & 0.869 & 0.732 \\ \cline{2-12} 
 & WeShap & \textbf{0.827} & 0.810 & \textbf{0.836} & \textbf{0.470} & 0.473 & 0.883 & 0.775 & \textbf{0.897} & 0.938 & \textbf{0.768} \\ \bottomrule[1.5pt]
\end{tabular}%
}
\end{table*}

In assessing LF quality, we confront the challenge posed by the varying scales of LF evaluation metrics. To standardize this process, we implement the following methodology: initially, we arrange LFs in descending order based on their evaluation metrics, mitigating the influence of scale discrepancies. Subsequently, we adopt an iterative approach where we progressively select the top-p LFs and utilize them to train both the label and downstream models. Our evaluation commences with the top-10 LFs and incrementally expands the LF subset size at intervals of 10 until it encompasses all LFs within the dataset. We present the average accuracy of the downstream model on the test set across these iterations in Table \ref{tab:lf-rank}. As this evaluation process requires training the downstream model repeatedly, which is time-consuming for the fine-tuning scenario, we only evaluated the feature-extraction scenario for the LF ranking experiments. 


Table \ref{tab:lf-rank} illustrates the superiority of WeShap values in ranking beneficial LFs. While WeShap does not consistently yield the optimal result, it demonstrates robust performance across diverse datasets. For both label model choices, WeShap achieves the highest average downstream model accuracy. Notably, WeShap exhibits particular efficacy when using Snorkel as the label model — a common PWS configuration in recent studies \cite{zhang2021wrench,hsieh2022nemo,zhang2022understanding}. In this scenario, WeShap outperforms the second-best metric (MC-Shap) by 1.4 points and the third-best metric (SIF) by 3.6 points on average while significantly reducing runtime, as demonstrated in Figure \ref{fig:runtime}. The most substantial improvement is observed in the Yelp dataset, where WeShap shows a remarkable 9.3-point increase over the next best metric.



\begin{figure}[htbp]
\centering
 \includegraphics[width=0.9\columnwidth]{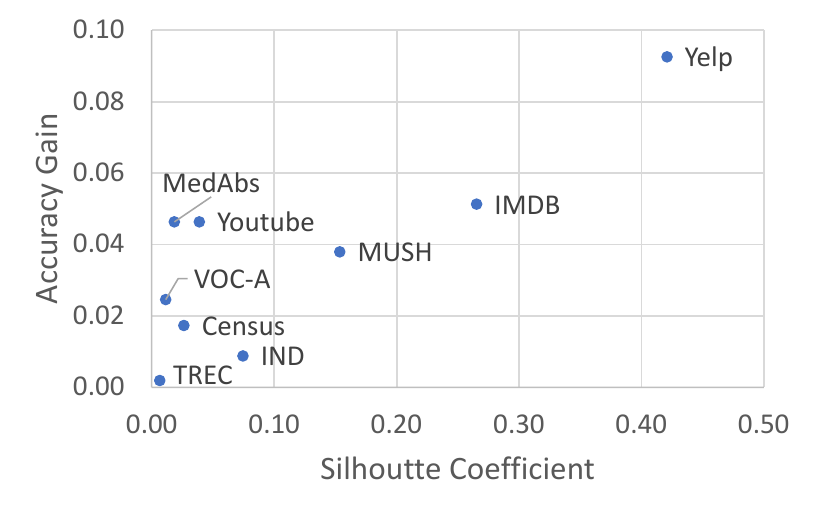}
 \caption{Average downstream model accuracy gain of WeShap compared to random baseline in ranking LFs.}\label{fig:acc-gain}
\end{figure}

\begin{figure*}
     \centering
     \begin{subfigure}[b]{0.33\textwidth}
         \centering
         \includegraphics[width=\textwidth]{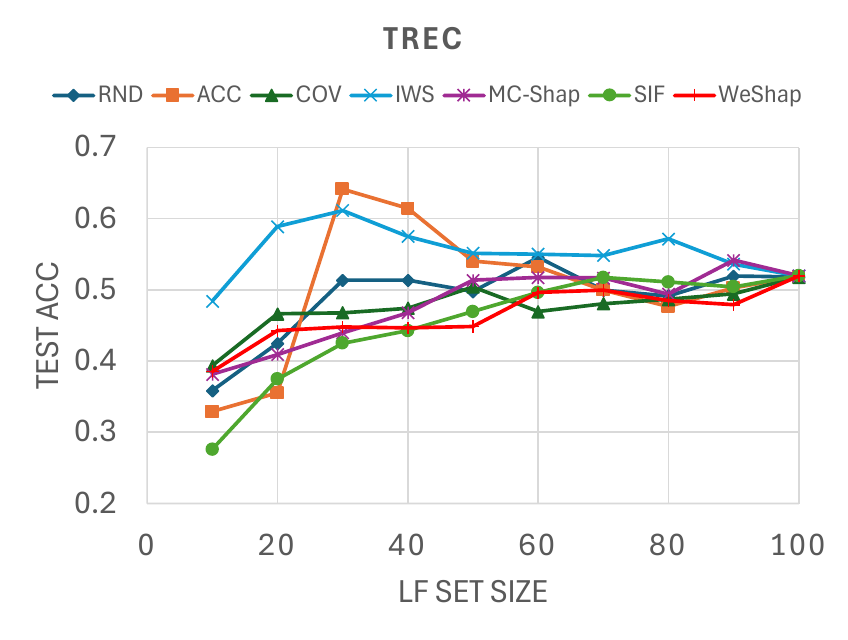}
     \end{subfigure}
     \hfill
     \begin{subfigure}[b]{0.33\textwidth}
         \centering
         \includegraphics[width=\textwidth]{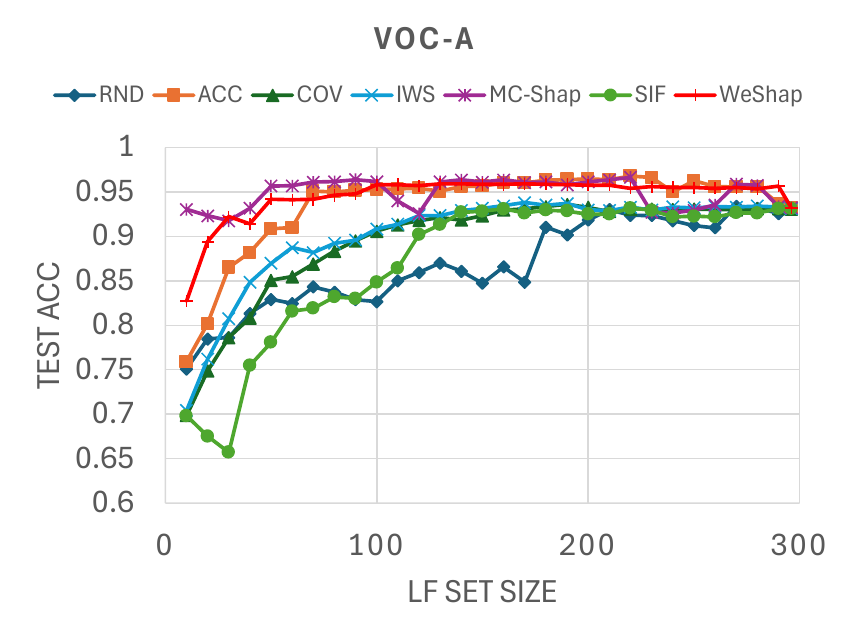}
     \end{subfigure}
     \hfill
     \begin{subfigure}[b]{0.33\textwidth}
         \centering
         \includegraphics[width=\textwidth]{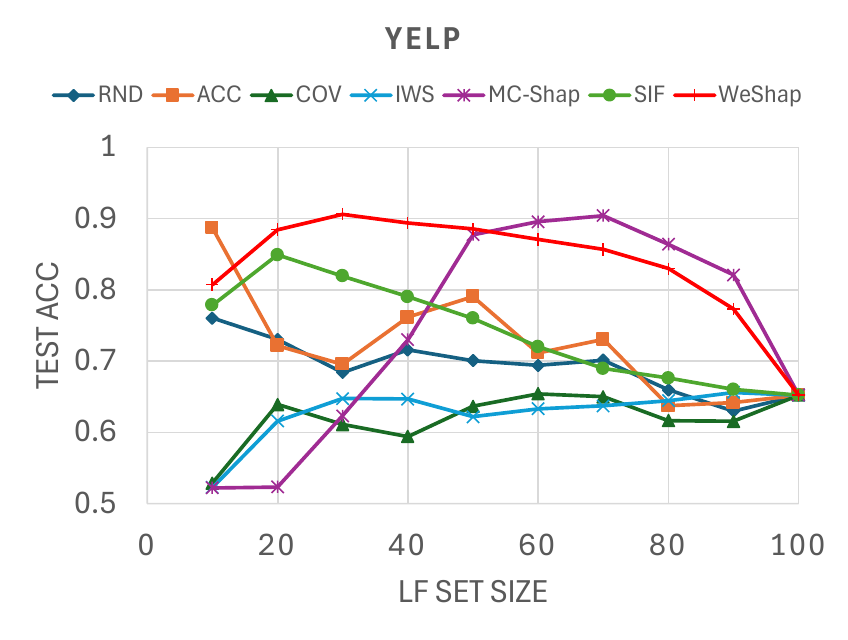}
     \end{subfigure}
        \caption{Ranking LFs on selected datasets.}
        \label{fig:lf-rank-selected}
\end{figure*}

To discern scenarios where WeShap values reliably rank LFs, we analyze the data distribution across the evaluated datasets and the characteristics associated with LFs. Notably, we observe a strong correlation between WeShap's LF ranking performance and the smoothness of the training data.

The smoothness assumption, a cornerstone in semi-supervised learning, posits that points in a high-dimensional space proximal to each other should share similar labels \cite{van2020survey}. To quantify data smoothness, we employ the silhouette coefficient \cite{rousseeuw1987silhouettes}, where a higher coefficient denotes superior smoothness. Given the varying factors affecting absolute end model accuracy across datasets, we assess WeShap's efficacy by measuring its average downstream model accuracy gain relative to a random baseline. 

Figure \ref{fig:acc-gain} depicts the silhouette coefficient plotted against the downstream model accuracy gain in the proxy framework. Evidently, a positive correlation ($r=0.810$) emerges between dataset smoothness and WeShap's accuracy gain. The rationale lies in WeShap's utilization of KNN as the downstream model within the proxy pipeline, wherein the KNN model's performance directly hinges on data smoothness. Consequently, if the dataset exhibits poor smoothness, the KNN model's accuracy diminishes, thereby impairing its efficacy as a proxy model for identifying beneficial LFs. 
The low smoothness in the TREC dataset likely contributes to WeShap's poor performance, a finding corroborated by the inferior performance of the Sentence-BERT encoder on TREC, as reported in \cite{reimers2019sentencebert}. While the encoder's poor performance affects all evaluated metrics, it has a more pronounced effect on WeShap, which uniquely relies on the smoothness assumption, as previously discussed.

To better illustrate the effect of LF set size on downstream model performance, Figure \ref{fig:lf-rank-selected} depicts the downstream model accuracy as we incorporate more LFs in the selected set across three datasets. As the dataset's smoothness level increases, WeShap demonstrates greater performance gains than baseline methods. In the TREC dataset, WeShap's performance is comparable to the random baseline, whereas in the Yelp dataset, it significantly outperforms all other metrics. Notably, WeShap exhibits performance advantages from the early stages in the VOC-Animal and Yelp datasets, with as few as tens of LFs selected. This underscores WeShap's utility in identifying the most beneficial LFs from a large set. Another observation is that when the metric reliably evaluates LF helpfulness (e.g., WeShap on Yelp), the downstream model accuracy resembles an inverted U-shape as the number of LFs increases. This occurs because we first include helpful LFs, followed by harmful ones in the selected LF set. The position of the curve's peak is dataset-dependent. Conversely, if the metric distinguishes between helpful and harmful LFs poorly, the downstream model accuracy tends to change monotonically as more LFs are incorporated into the set.

\subsection{PWS Pipeline Revision} \label{sec:pws-revision}
\begin{table*}[]
\caption{Downstream model accuracy after revising the PWS pipeline. Tabular datasets (MUSH, Census) do not have correspond-
ing pre-trained models for fine-tuning. }
\label{tab:revision}
\resizebox{2\columnwidth}{!}{%
\begin{tabular}{cccccccccccc}
\toprule[1.5pt]
Scenario & Metric & YouTube & IMDB & Yelp & MedAbs & TREC & MUSH & Census & IND & VOC-A & AVG \\ \hline
\multirow{9}{*}{\begin{tabular}[c]{@{}c@{}}Feature\\ Extraction\\ (LogReg)\end{tabular}} & Base & 0.744 & 0.811 & 0.655 & 0.433 & 0.487 & 0.906 & 0.751 & 0.886 & 0.932 & 0.734 \\ \cline{2-12} 
 & ACC & 0.663 & 0.845 & 0.839 & 0.468 & 0.613 & 0.957 & 0.769 & 0.909 & 0.949 & 0.779 \\ \cline{2-12} 
 & COV & 0.718 & 0.809 & 0.638 & 0.445 & 0.496 & 0.900 & 0.764 & 0.898 & 0.786 & 0.717 \\ \cline{2-12} 
 & IWS & 0.712 & 0.840 & 0.757 & 0.477 & 0.621 & 0.862 & 0.764 & 0.901 & 0.808 & 0.749 \\ \cline{2-12} 
 & MC-Shap & 0.841 & 0.808 & 0.911 & 0.479 & 0.517 & 0.927 & 0.788 & 0.904 & \textbf{0.959} & 0.793 \\ \cline{2-12}
 & SIF-P & 0.747 & 0.807 & 0.857 & 0.475 & 0.526 & 0.890 & 0.790 & 0.880 & 0.810 & 0.754 \\ \cline{2-12} 
 & SIF-F & \textbf{0.917} & 0.844 & 0.895 & 0.430 & 0.549 & 0.914 & 0.781 & \textbf{0.911} & 0.925 & 0.796 \\ \cline{2-12} 
 & WeShap-P & 0.868 & 0.827 & 0.905 & 0.486 & 0.496 & 0.927 & 0.764 & 0.905 & 0.950 & 0.792 \\ \cline{2-12} 
 & WeShap-F & 0.910 & \textbf{0.852} & \textbf{0.927} & \textbf{0.603} & \textbf{0.632} & \textbf{0.990} & \textbf{0.838} & 0.909 & 0.952 & \textbf{0.846} \\ \cline{2-12} 
 & \textit{Golden} & \textit{0.914} & \textit{0.873} & \textit{0.942} & \textit{0.646} & \textit{0.922} & \textit{1.000} & \textit{0.807} & \textit{0.929} & \textit{0.972} & \textit{0.889} \\ \hline
\multirow{9}{*}{\begin{tabular}[c]{@{}c@{}}Finetuning\\ (BERT/\\ Resnet-50)\end{tabular}} & Base & 0.877 & 0.789 & 0.770 & 0.445 & 0.495 & -- & -- & 0.880 & 0.855 & 0.730 \\ \cline{2-12} 
 & ACC & 0.856 & 0.812 & 0.830 & 0.409 & 0.590 & -- & -- & 0.905 & 0.977 & 0.768 \\ \cline{2-12} 
 & COV & 0.890 & 0.803 & 0.684 & 0.482 & 0.466 & -- & -- & 0.885 & 0.778 & 0.712 \\ \cline{2-12} 
 & IWS & 0.890 & 0.797 & 0.635 & 0.501 & 0.516 & -- & -- & 0.886 & 0.926 & 0.736 \\ \cline{2-12} 
 & SIF-P & 0.881 & 0.806 & 0.827 & 0.466 & 0.454 & -- & -- & 0.887 & 0.715 & 0.719 \\ \cline{2-12} 
 & SIF-F & \textbf{0.934} & 0.857 & 0.866 & 0.489 & 0.517 & -- & -- & \textbf{0.914} & 0.946 & 0.789 \\ \cline{2-12} 
 & WeShap-P & 0.849 & 0.822 & 0.872 & 0.473 & 0.495 & -- & -- & 0.877 & 0.920 & 0.758 \\ \cline{2-12} 
 & WeShap-F & 0.919 & \textbf{0.882} & \textbf{0.942} & \textbf{0.539} & \textbf{0.621} & -- & -- & \textbf{0.914} & \textbf{0.978} & \textbf{0.828} \\ \cline{2-12} 
 & \textit{Golden} & \textit{0.968} & \textit{0.888} & \textit{0.955} & \textit{0.644} & \textit{0.960} & \textit{--} & \textit{--} & \textit{0.918} & \textit{0.981} & \textit{0.902} \\ \bottomrule[1.5pt]
\end{tabular}%
}
\end{table*}

Subsequently, we assess the impact of WeShap scores on refining the PWS pipeline regarding the downstream model's accuracy post-refinement. We explore two distinct refinement strategies: (1) Pruning: Eliminating detrimental LFs with low evaluation scores, and (2) Fine-Grained Revision: Modifying specific LF outputs or soft labels predicted by the label model.

Both WeShap and SIF support fine-grained revision, thus enabling the evaluation of both options for these methods. We designate the approaches as follows: WeShap-P (or SIF-P) for pruning out harmful LFs and WeShap-F (or SIF-F) for fine-grained revision. As for other metrics, we solely evaluate their efficacy in pruning out harmful LFs, as they lack support for fine-grained revision.

The revision pipeline operates as follows:
For LF pruning, we commence by ranking the LFs based on the evaluation metric, subsequently selecting the top-p LFs for training the label model. We tune the threshold p to optimize the PWS pipeline's accuracy on the validation dataset utilizing Optuna \cite{optuna_2019} with a Tree-structured Parzen Estimator (TPE) Sampler for 20 trials.

For fine-grained revision, we adjust threshold $\theta$ for muting LF outputs (as described in Section \ref{sec:use-cases}) for WeShap-F, and threshold $\alpha$ for perturbing train losses for SIF-F (which serves a similar purpose as $\theta$ for WeShap, as elaborated in \cite{zhang2022understanding}). We optimize these thresholds to maximize the PWS pipeline's accuracy on the validation dataset using Optuna with a TPE Sampler for 100 trials. 

Additionally, we include two baselines for comparison: the \textit{Base} method reflects the downstream model's accuracy without any revision, while the \textit{Golden} method showcases the downstream model's accuracy when trained on golden training labels without weak supervision. These baselines provide insight into the performance of specific revision methods and their disparity compared to using golden labels.

Table \ref{tab:revision} presents various revision methods' performance under feature extraction and fine-tuning scenarios. 
We evaluate the MC-Shap metric exclusively in the feature extraction scenario, as computing MC-Shap in fine-tuning scenarios is prohibitively time-consuming. Note that the MUSH and Census datasets are tabular ones that do not have corresponding pre-trained models for fine-tuning, so we omit the fine-tuning results on these two datasets. Due to space limitations, we report the figures when Snorkel is used as the label model. The performance comparison results are similar when using the MV model. 

We first focus on the feature-extraction scenario. In the context of LF pruning, both the WeShap and MC-Shap methods emerge as frontrunners, showcasing an advantage of 1.3 points over alternative pruning strategies. The result indicates the effectiveness of the Shapley value in ranking LFs. When integrating fine-grained supervision, the fine-grained revision methodologies surpass traditional LF pruning techniques. Particularly, WeShap-F distinguishes itself by outshining all other baseline methods in 6 out of 9 datasets, with the MedAbs dataset witnessing the most substantial leap in performance, an 11.7-point increase. 
On average, across nine datasets, WeShap-F significantly boosts downstream model accuracy by 11.2 points over using original LFs and exceeds the performance of state-of-the-art (SOTA) revision techniques by 5.0 points. For the fine-tuning scenario, we observe similar trends, where WeShap outperforms other baseline methods in 6 out of 7 datasets and achieves competitive results in the remaining one, surpassing the performance of SOTA revision techniques by 3.9 points on average. 

Surprisingly, WeShap demonstrates superior performance when revising the PWS pipeline on the TREC dataset despite its mediocre ranking of LFs on the same dataset. We hypothesize that this discrepancy arises because fine-grained revision performance is more influenced by local data smoothness (where neighboring data points share similar labels) than global data smoothness (where data points with similar labels cluster together). To test this hypothesis, we trained a KNN model on TREC using ground-truth labels, achieving an accuracy of 73.2\%. This result suggests that TREC exhibits good local smoothness despite its poor global smoothness, as indicated by a low Silhouette score. 

\subsection{Understanding Pipeline Behaviors}

\begin{table}[]
\caption{LF statistics for VOC07-Animal (Snorkel-LogReg).}
\label{tab:lf-voc07}
\resizebox{\columnwidth}{!}{%
\begin{tabular}{cccccc}
\toprule[1.5pt]
LF                           & Accuracy & Coverage & Overlap & Conflict & WeShap \\ \hline
animal$\to$Y       & 0.992    & 0.223    & 0.223   & 0.221    & 0.024  \\ \hline
mammal$\to$Y       & 0.997    & 0.154    & 0.154   & 0.152    & 0.015  \\ \hline
vehicle$\to$N      & 0.977    & 0.342    & 0.342   & 0.027    & 0.012  \\ \hline
...                          & ...      & ...      & ...     & ...      & ...    \\ \hline
tree$\to$N        & 0.626    & 0.161    & 0.161   & 0.068    & -0.007 \\ \hline
outdoor$\to$N      & 0.660    & 0.604    & 0.604   & 0.229    & -0.022 \\ \bottomrule[1.5pt]
\end{tabular}%
}
\end{table}

We present a case study on the VOC-Animal dataset to demonstrate how WeShap values can be used to understand and improve the PWS pipeline through human-in-the-loop intervention. The study uses Snorkel as the label model, Resnet-50 as the feature extractor, and logistic regression as the downstream model. Class Y indicates the presence of animals, while N indicates their absence.

Initially, the test set accuracy was 0.920. Table \ref{tab:lf-voc07} shows LF statistics for the VOC07-Animal dataset, ranked by WeShap values. The top LF, assigning Y when the image is tagged "animal," has a positive WeShap value of 0.024, indicating it improves the proxy downstream model (KNN) accuracy by an average of 0.024. The second LF, "mammal$\to$Y," also has a positive WeShap value of 0.015. Based on these findings, we added three new animal-related LFs ("cat$\to$Y," "dog$\to$Y," and "bird$\to$Y"), increasing test set accuracy to 0.934. Following that, we observed that the LF "outdoor$\to$N" had a negative WeShap value of -0.022 despite an accuracy of 0.66. Removing this LF improved accuracy to 0.941. Lastly, noting that high WeShap values corresponded to high-accuracy LFs, we increased the LF selection threshold to 0.7, further improving test set accuracy to 0.957. These revisions led to a total accuracy improvement of 0.034, a substantial enhancement given that the downstream model's accuracy using ground truth labels is 0.972.

WeShap values also offer a versatile application in identifying the most influential LFs and training data pertinent to specific test instances. Illustrated in Figure \ref{fig:misprediction}, we showcase a subset of images from the validation dataset that the downstream model mispredicts. Subsequently, we compute WeShap scores for these mispredicted images individually.

The influential LFs are identified as those with the lowest WeShap scores, while the influential training images are determined by the images associated with the lowest WeShap contribution scores. Our analysis reveals that the mispredictions predominantly stem from certain LFs associated with the absence of animals. Consequently, users can opt to discard or downweight these LFs to rectify these mispredictions.


\begin{figure*}[htbp]
\centering
 \includegraphics[width=1.8\columnwidth]{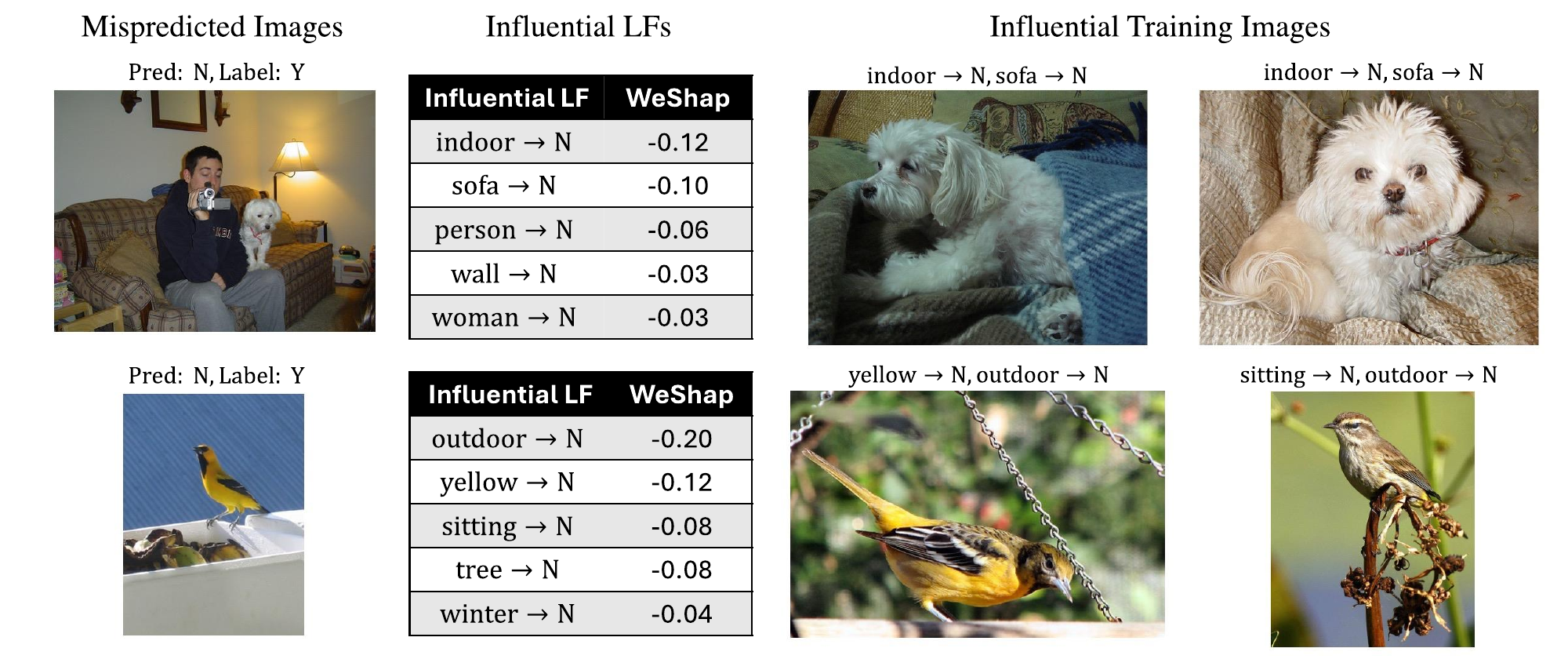}
 \caption{Analyze mispredictions in VOC07-Animal dataset (Snorkel-LogReg). Class "Y" denotes the presence of animals, and "N" denotes the absence of them. }\label{fig:misprediction}
\end{figure*}

\subsection{Ablation Studies}

\begin{figure*}[htbp]
\centering
\begin{subfigure}[c]{\columnwidth}
 \centering
 \includegraphics[width=0.9\columnwidth]{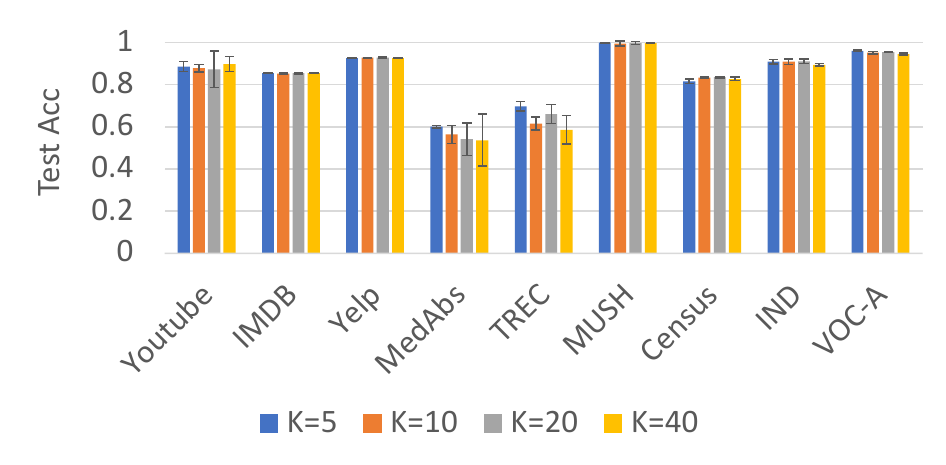}
 \caption{Effect of K values.}\label{fig:ablation_k}
\end{subfigure}
\hfill
\begin{subfigure}[c]{\columnwidth}
\centering
 \includegraphics[width=0.9\columnwidth]{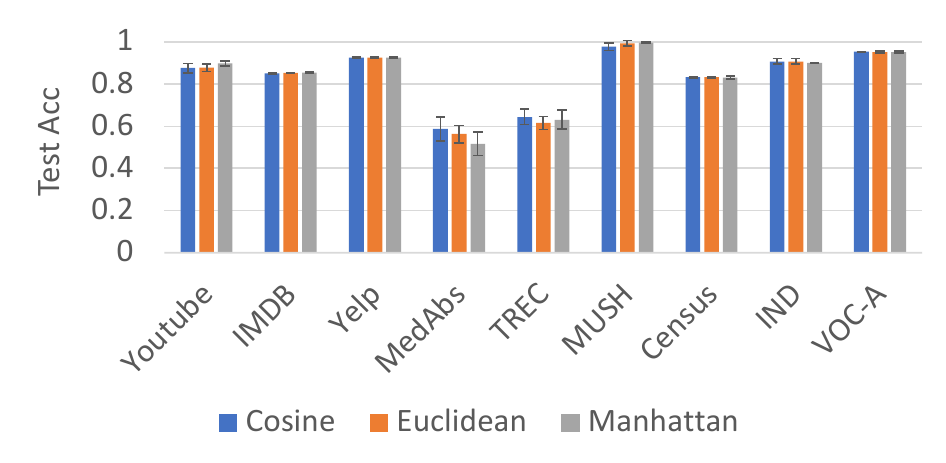}
 \caption{Effect of distance metrics.}\label{fig:ablation_distance}
\end{subfigure}

\begin{subfigure}[c]{\columnwidth}
\centering
 \includegraphics[width=0.9\columnwidth]{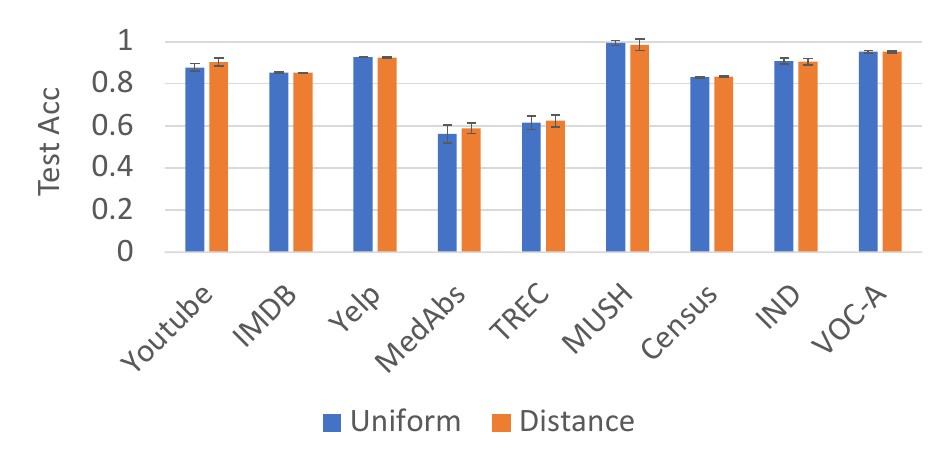}
 \caption{Effect of weight functions.}\label{fig:ablation_weight}
\end{subfigure}
\hfill
\begin{subfigure}[c]{\columnwidth}
\centering
 \includegraphics[width=0.9\columnwidth]{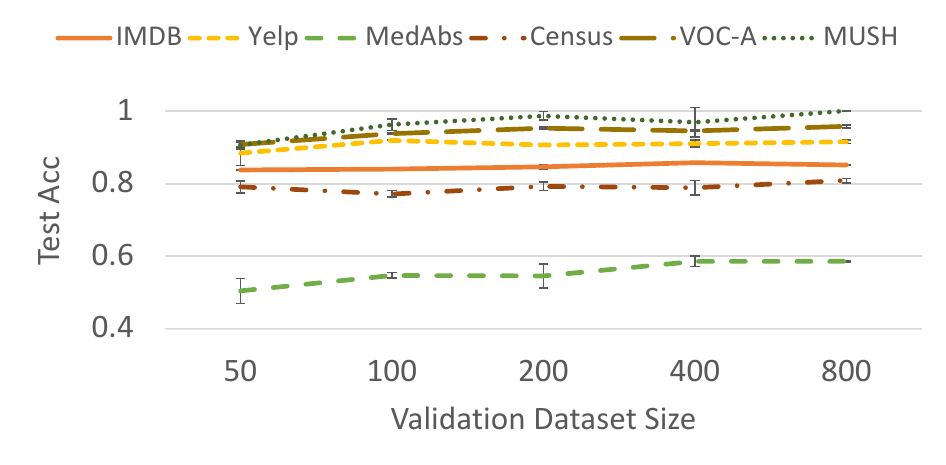}
 \caption{Effect of holdout dataset size.}\label{fig:ablation_valid}
\end{subfigure}
\caption{Effect of WeShap configurations for PWS pipeline revision (Snorkel-Logreg).}\label{fig:ablation}
\end{figure*}

\begin{figure}[htbp]
\centering
 \includegraphics[width=0.9\columnwidth]{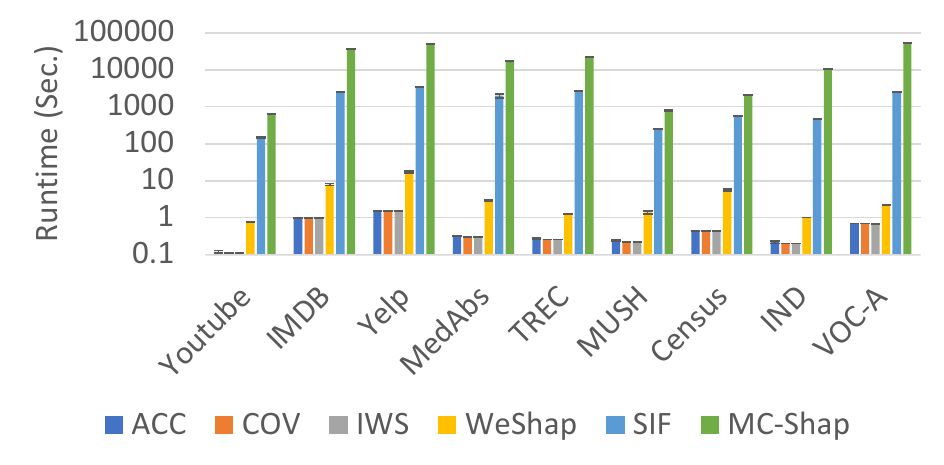}
 \caption{Runtime for different LF evaluation metrics.}\label{fig:runtime}
\end{figure}

We assess the sensitivity of WeShap scores to various configuration choices, including the number of neighbors (K), distance metrics, weight functions, and holdout dataset size. Our default settings are K=10, Euclidean distance, and uniform weight. We evaluate performance using downstream model accuracy after fine-grained PWS pipeline revision (WeShap-F). We test the following configuration choices:

\begin{description}
  \item[Number of neighbors (K):] 5, 10, 20, and 40.
  \item[Distance metrics:] Cosine, Euclidean, and Manhattan.
  \item[Weight functions:] Uniform (Equation \ref{eq:weshap-instance}) and inverse distance (Equation \ref{eq:weshap-instance-weighted}).
  \item[Holdout dataset sizes:] 50, 100, 200, 400, and 800 (for datasets with over 800 validation points).
\end{description}

The findings, as depicted in Figure \ref{fig:ablation}, demonstrate the robustness of WeShap scores in enhancing the PWS pipeline, irrespective of variations in K values, distance metrics, and weight functions. Moreover, enlarging the holdout dataset consistently enhances the performance of the PWS pipeline. Specifically, doubling the holdout dataset size correlates with a 1.2-point increase in downstream model accuracy across our experiments. Remarkably, even with as few as 50 labeled data points, WeShap scores contribute to a substantial 5.7-point average improvement in downstream model accuracy compared to the original PWS pipeline. These results underscore the continued utility of WeShap scores, particularly in scenarios with constrained labeling budgets.






\subsection{Runtime}
The experiments were conducted on a 4-core Intel(R) Xeon(R) Gold 5122 CPU (3.60 GHz) with an NVIDIA TITAN Xp GPU (12 GB memory). Multiprocessing and GPU acceleration were used for MC-Shap and SIF score computations, respectively. Figure \ref{fig:runtime} shows the runtimes for LF evaluation metrics, excluding label model and end model training times. Results are averaged over five runs, with error bars indicating standard deviation.

WeShap scores were computed within seconds across all datasets, while SIF and MC-Shap required significantly longer — often hours to days, even with parallelization. On average, SIF and MC-Shap took 356 and 5,300 times longer than WeShap, respectively. This difference is due to their computational complexity: SIF calculates Hessian-vector products recursively, while MC-Shap requires training both label and downstream models $m \times n$ times (where $m$ is the LF set cardinality and $n$ is the Monte Carlo sample size). In our evaluation, this resulted in tens of thousands of model training iterations. Consequently, WeShap offers a more efficient and lightweight approach to LF evaluation.

\section{Discussions}
In this section, we compare WeShap with selected baseline methods in our evaluation, highlighting its key advantages and limitations. 

\textbf{Weshap vs SIF.} Both methods offer fine-grained influence decomposition but differ in their approach.
SIF uses the \textit{(i-j-c) effect} to quantify the weight of the label model for $L_{ij}$ in predicting $P(y_i=c)$. WeShap employs \textit{WeShap weights} $\varphi_{ij}^c$ to measure the \textit{average contribution} of $L_{ij}$ in predicting $P(y_i=c)$. The key distinction is that while SIF focuses on current label model weights, WeShap considers all possible LF subsets to compute average contributions.

For example, when assessing $\lambda_j$'s impact on $x_i$ in the MV label model, the (i-j-c) effect is binary (1 when $L_{ij}=c$, 0 otherwise). In contrast, the WeShap weight $\varphi_{ij}^c$ accounts for other LFs' outputs on $x_i$, providing a more comprehensive view of LF contribution.

Our evaluation shows that WeShap outperforms SIF in ranking LFs and improving PWS pipeline accuracy while being computationally more efficient. While approaches like FastIF \cite{guo2021fastif} could potentially be applied to accelerate SIF, they are unlikely to bridge the runtime gap with WeShap fully. Optimizing influence score computation remains an open research area, though beyond our current scope. In summary, WeShap's broader reflection of LF contributions and efficiency give it practical advantages over SIF, despite both offering valuable insights into LF influence within the PWS pipeline.

\textbf{WeShap vs MC-Shap.} Both WeShap and MC-Shap are founded on Shapley values. MC-Shap's primary advantage lies in its ability to use the exact label and end models for Shapley value computation, potentially yielding more accurate LF evaluation results compared to proxy models. However, the Monte Carlo sampling process introduces estimation errors that may offset this advantage. As our demonstrations have shown, MC-Shap values are computationally expensive, rendering it impractical to evaluate a large number of samples within a reasonable timeframe. Furthermore, unlike WeShap, MC-Shap does not support fine-grained decomposition or revision, leading to inferior performance in PWS revision tasks.

\textbf{Limitations.} We acknowledge several limitations of WeShap values in this study. Firstly, our theoretical analysis is predicated on a specific proxy PWS setting, as discussed in Section \ref{sec:proxy}. Consequently, the WeShap values may not accurately reflect the ground-truth Shapley values of LFs in alternative settings. Extending the theoretical guarantees of WeShap to broader contexts remains an intriguing avenue for future research. Secondly, while WeShap demonstrates robust performance across the evaluated datasets, our analysis reveals that its efficacy is influenced by the underlying dataset's smoothness. Future investigations could explore techniques such as contrastive learning \cite{khosla2020supervised} to enhance data smoothness, potentially improving LF evaluation accuracy.
\section{Related Work}
\textbf{Programmatic Weak Supervision.} In the programmatic weak supervision framework \cite{ratner2016data, ratner2017snorkel, zhang2022survey}, users design labeling functions (LFs) that come from various sources to label large datasets efficiently, such as heuristics \cite{ratner2017snorkel,yu2020fine}, pre-trained models \cite{bach2019snorkel,smith2022language}, external knowledge bases \cite{hoffmann2011knowledge,liang2020bond}, and crowd-sourced labels \cite{lan2019learning}. 
Researchers have also explored approaches to automate the LF design process \cite{varma2018snuba, smith2022language, guan2023can, huangscriptoriumws} or guide users to develop LFs more efficiently \cite{hsieh2022nemo,denham2022witan,boecking2020interactive}. On the other hand, there is rich literature on learning the models to aggregate LFs and de-noise the weak labels \cite{ratner2017snorkel,ratner2019training,fu2020fast,varma2019multi,wu2022learning,zhang2021creating,zhang2022binary,awasthi2019learning}. 
While the work on programmatic weak supervision is abundant, few works focused on the rigorous evaluation of LFs. 
Work on LF design \cite{hsieh2022nemo,guan2023can,hancock2018training, boecking2020interactive} usually use simple heuristics like empirical accuracy or coverage to evaluate and prune out LFs, which does not support fine-grained analysis and revision. The most relevant work is WS Explainer (SIF) \cite{zhang2022understanding}, which leverages the influence function \cite{koh2017understanding} to evaluate the influence of each weak supervision source. 

\textbf{Shapley Values. }  Shapley value \cite{shapley1953value}, originated from game theory and has been applied in machine learning tasks including feature selection \cite{cohen2007feature,sun2012feature,williamson2020efficient}, data evaluation \cite{jia2019towards,ghorbani2019data,shim2021online,kwon2021efficient}, deep learning explanation \cite{chen2018shapley,ancona2019explaining,zhang2021interpreting} and federated learning \cite{liu2022gtg}. The wide application of the Shapley value is credited to its favorable properties that include fairness, symmetry, and efficiency \cite{chalkiadakis2022computational,rozemberczki2022shapley}. However, the high computational complexity of the Shapley value needs to be addressed before applying it in practice. Common approaches to efficiently computing approximate Shapley values include Monte-Carlo permutation sampling \cite{castro2009polynomial,maleki2013bounding,castro2017improving,burgess2021approximating}, multilinear extension \cite{mitchell2022sampling,okhrati2021multilinear} and linear regression approximation \cite{lundberg2017unified,covert2021improving}. While computing the exact Shapley value is infeasible in most cases, it is possible in specific settings. Specifically, Jia et al. \cite{jia2018efficient} demonstrated that the Shapley value for data evaluation can be efficiently computed for nearest-neighbor algorithms. 
\section{Conclusions}
In our study, we propose WeShap values as an innovative method for assessing and refining Programmatic Weak Supervision (PWS) sources. We demonstrate notable computational efficiency and versatility across various datasets and PWS configurations. 
Our results unveil an average downstream model accuracy enhancement of 5.0 points compared to conventional methods, highlighting the pivotal contribution of WeShap values in the progression of machine learning models.

\bibliographystyle{ACM-Reference-Format}
\bibliography{reference_weshap}

\end{document}